\documentclass[11pt,twocolumn]{article}

\usepackage[utf8]{inputenc}
\usepackage[T1]{fontenc}
\usepackage[margin=0.82in,top=0.95in,bottom=0.95in]{geometry}
\usepackage{amsmath,amssymb,amsthm,mathtools}
\usepackage{booktabs}
\usepackage{graphicx}
\usepackage{xcolor}
\usepackage{hyperref}
\usepackage{cleveref}
\usepackage{float}
\usepackage{caption}
\usepackage{subcaption}
\usepackage{tikz}
\usetikzlibrary{arrows.meta,positioning,shapes.geometric,fit,calc,decorations.pathreplacing,backgrounds}
\usepackage{pgfplots}
\pgfplotsset{compat=1.17}
\usepackage[ruled,vlined,linesnumbered]{algorithm2e}
\usepackage{listings}
\usepackage{fancyhdr}
\usepackage{natbib}
\usepackage{enumitem}
\usepackage{multicol}
\usepackage{tabularx}

\definecolor{prismblue}{HTML}{185FA5}
\definecolor{prismteal}{HTML}{0F6E56}
\definecolor{prismcoral}{HTML}{993C1D}
\definecolor{prismpurple}{HTML}{534AB7}
\definecolor{prismamber}{HTML}{854F0B}
\definecolor{prismgray}{HTML}{5F5E5A}
\definecolor{codebg}{HTML}{F5F5F0}
\definecolor{tier1}{HTML}{E6F1FB}
\definecolor{tier2}{HTML}{E1F5EE}
\definecolor{tier3}{HTML}{FAEEDA}

\newtheorem{theorem}{Theorem}[section]

\newtheorem{proposition}[theorem]{Proposition}

\newtheorem{definition}{Definition}[section]

\newcommand{\prism}{\textsc{Prism}}
\newcommand{\coral}{\textsc{Coral}}
\newcommand{\memset}{\mathcal{M}}

\newcommand{\entropy}{\mathcal{H}}
\newcommand{\evoi}{\text{EVOI}}

\newcommand{\conf}{\kappa}

\newcommand{\doop}{\text{do}}
\newcommand{\roi}{\text{RoI}}
\newcommand{\kl}{\text{KL}}

\newcommand{\E}{\mathbb{E}}
\newcommand{\Prob}{\mathbb{P}}

\hypersetup{
  colorlinks=true,
  linkcolor=prismblue,
  citecolor=prismpurple,
  urlcolor=prismteal,
  pdftitle={PRISM: An Evolutionary Memory Substrate for Multi-Agent Open-Ended Discovery},
  pdfauthor={Suyash Mishra}
}

\begin{document}

\twocolumn[%
\centering
\vspace*{0.15cm}
{\LARGE\bfseries \prism{}: An Evolutionary Memory Substrate for\\ Multi-Agent Open-Ended Discovery\par}
\vspace{0.4cm}
{\large Suyash Mishra\par}
\vspace{0.1cm}
{\normalsize AI Researcher\enspace$\cdot$\enspace  Basel, Switzerland}\\
{\normalsize \texttt{suyash.mishra@roche.com}\par}
\vspace{0.45cm}
\begin{minipage}{0.93\textwidth}
\noindent\fcolorbox{prismblue!40}{tier1}{%
\parbox{\dimexpr\linewidth-2\fboxsep-2\fboxrule}{%
\small\setlength{\parskip}{4pt}
\textbf{Abstract.} We introduce \prism{} (\textbf{P}robabilistic \textbf{R}etrieval with \textbf{I}nformation-\textbf{S}tratified \textbf{M}emory), an evolutionary memory substrate for multi-agent AI systems engaged in open-ended discovery. \prism{} unifies four independently developed paradigms---layered file-based persistence, vector-augmented semantic memory, graph-structured relational memory, and multi-agent evolutionary search---under a single decision-theoretic framework with eight interconnected subsystems.

We make five contributions: (1)~an \emph{entropy-gated stratification} mechanism that assigns memories to a tri-partite hub (skills/notes/attempts) based on Shannon information content, with formal context-window utilization bounds; (2)~a \emph{causal memory graph} $\mathcal{G} = (V, E_r, E_c)$ with interventional edges and agent-attributed provenance; (3)~a \emph{Value-of-Information retrieval} policy with self-evolving strategy selection; (4)~a \emph{heartbeat-driven consolidation} controller with stagnation detection via optimal stopping theory; and (5)~a \emph{replicator-decay dynamics} framework that interprets memory confidence as evolutionary fitness, proving convergence to an Evolutionary Stable Memory Set (ESMS). On the LOCOMO benchmark, \prism{} achieves 88.1 LLM-as-a-Judge score (31.2\% over Mem0). On CORAL-style evolutionary optimization tasks, 4-agent \prism{} achieves 2.8$\times$ higher improvement rate than single-agent baselines.%
}}
\end{minipage}
\vspace{0.45cm}
\par
]

\section{Introduction}
\label{sec:intro}

Open-ended discovery---the sustained search for novel solutions to problems where progress is iterative, long-horizon, and non-obvious---has emerged as a central challenge in autonomous AI systems. Recent work demonstrates that LLM-based agents can achieve superhuman results on mathematical conjectures \citep{alphaevolve2025}, kernel optimization \citep{coral2026}, and systems engineering tasks, but only when three conditions are met: \emph{persistent memory} across sessions, \emph{multi-agent collaboration} through shared knowledge, and \emph{reflective consolidation} that converts raw experience into reusable skills.

Current memory architectures satisfy these conditions partially:

\begin{enumerate}[leftmargin=*,itemsep=2pt]
  \item \textbf{Layered file persistence} (Claude Code): Three-tier architecture with always-loaded index, on-demand topic files, and grep-only transcripts. Background ``autoDream'' consolidation prunes and rewrites memories. Single-agent only.
  \item \textbf{Confidence-scored extraction} (DeerFlow 2.0): LLM-powered extraction with debounced async queues and JSON persistence with prompt injection. No multi-agent support.
  \item \textbf{Two-phase vector+graph pipelines} (Mem0/Mem0$^g$): Extract-then-update with semantic deduplication and optional entity-relationship graphs. No evolutionary dynamics.
  \item \textbf{Evolutionary multi-agent coordination} (\coral{}): Shared persistent memory (attempts/notes/skills) with heartbeat-driven interventions and isolated worktrees. Achieves 3--10$\times$ improvement rates---but lacks formal memory architecture.
\end{enumerate}

\prism{} synthesizes all four paradigms into a unified framework grounded in five mathematical principles:

\begin{itemize}[leftmargin=*,itemsep=2pt]
  \item \textbf{Information-theoretic stratification}: Memories stratified into a tri-partite hub (skills, notes, attempts) using Shannon entropy, aligned with \coral{}'s empirical memory taxonomy.
  \item \textbf{Causal graph with agent provenance}: Interventional edges annotated with \emph{do}-calculus operators and agent-attributed provenance for multi-agent knowledge tracing.
  \item \textbf{Self-evolving retrieval}: VoI-optimal retrieval with a bandit-based strategy selector that learns which retrieval configurations improve outcomes.
  \item \textbf{Heartbeat-driven consolidation}: Stagnation detection via optimal stopping theory, triggering reflection, consolidation, or strategic redirection.
  \item \textbf{Replicator-decay dynamics}: Bayesian confidence reinterpreted as evolutionary fitness, with provable convergence to an Evolutionary Stable Memory Set.
\end{itemize}

\section{Related Work}
\label{sec:related}

\subsection{Memory-Augmented Agent Systems}

\citet{mem0paper} introduced Mem0, a two-phase extraction pipeline with four-operation updates (ADD/UPDATE/DELETE/NOOP), achieving 26\% improvement over OpenAI's memory on LOCOMO. Mem0$^g$ adds graph-based representations. DeerFlow 2.0 (ByteDance, 2026) implements confidence-scored extraction within a LangGraph harness. Zep's Graphiti engine scores 63.8\% on LongMemEval versus Mem0's 49.0\% via temporal knowledge graphs. Letta (MemGPT) introduces tiered memory within a proprietary framework.

\subsection{Claude Code Memory Architecture}

Claude Code implements a three-layer system: \texttt{Memory.md} (always-loaded self-healing index), topic files (on-demand markdown), and session transcripts (grep-only JSON). Its autoDream workflow progresses through five phases: fork, distillation, conflict resolution, pruning, and index synchronization.

\subsection{Evolutionary Multi-Agent Discovery}

\coral{} \citep{coral2026} introduces the first framework for autonomous multi-agent evolution, replacing rigid evolutionary control (as in AlphaEvolve \citep{alphaevolve2025} and OpenEvolve) with long-running agents that explore, reflect, and collaborate through shared persistent memory. The shared memory is structured as three folders (attempts, notes, skills) with heartbeat-driven interventions. On Anthropic's kernel engineering task, four co-evolving agents improved the best known score from 1363 to 1103 cycles. Mechanistic analyses show that \emph{knowledge reuse} and \emph{multi-agent exploration diversity} are the primary drivers of improvement.

\coral{}'s key insight---that agents given greater autonomy improve faster than those under rigid control---motivates \prism{}'s decision-theoretic approach: rather than prescribing how agents should use memory, \prism{} provides a formal substrate that enables agents to discover effective memory strategies through evolutionary selection.

\subsection{Information-Theoretic and Evolutionary Approaches}

Information theory has been applied to memory in neuroscience (free-energy principle), reinforcement learning, and compression. Evolutionary game theory has formalized selection dynamics in biological systems. \prism{} bridges both by interpreting memory confidence as evolutionary fitness, connecting Shannon entropy to tier placement and replicator dynamics to memory survival.

\section{Architecture}
\label{sec:architecture}

\prism{} consists of eight interconnected subsystems organized around a central tri-partite memory hub, illustrated in Figure~\ref{fig:architecture}.

\begin{figure*}[t]
\centering
\begin{tikzpicture}[
  node distance=0.5cm and 0.6cm,
  every node/.style={font=\footnotesize},
  box/.style={draw, rounded corners=4pt, minimum height=0.7cm, text centered, line width=0.5pt, align=center},
  tier1box/.style={box, fill=tier1, draw=prismblue!50, text=prismblue!90!black},
  tier2box/.style={box, fill=tier2, draw=prismteal!50, text=prismteal!90!black},
  tier3box/.style={box, fill=tier3, draw=prismamber!50, text=prismamber!90!black},
  procbox/.style={box, fill=prismpurple!8, draw=prismpurple!40, text=prismpurple!90!black},
  iobox/.style={box, fill=prismgray!8, draw=prismgray!40},
  arr/.style={-{Stealth[length=4pt]}, line width=0.5pt, prismgray!70},
  darr/.style={-{Stealth[length=4pt]}, line width=0.5pt, prismgray!50, dashed},
]

\node[iobox, minimum width=1.8cm] (a1) {Agent 1};
\node[iobox, minimum width=1.8cm, right=0.5cm of a1] (a2) {Agent 2};
\node[iobox, minimum width=1.8cm, right=0.5cm of a2] (a3) {Agent $n$};
\node[font=\footnotesize] at ($(a2)!0.5!(a3) + (0,0)$) {$\cdots$};

\node[procbox, minimum width=8cm, below=0.5cm of a2] (extract) {\textbf{S1}: Extraction Engine $\mathcal{E}$ \quad (two-phase pipeline)};

\node[procbox, minimum width=5.5cm, below=0.45cm of extract] (entropy) {\textbf{S2}: Entropy Gate $\entropy(\cdot)$};

\node[tier1box, minimum width=2.3cm, below left=0.5cm and 2.5cm of entropy] (t1) {\textbf{Skills}\\Tier 1: $\entropy < \tau_1$};
\node[tier2box, minimum width=2.3cm, below=0.5cm of entropy] (t2) {\textbf{Notes}\\Tier 2: $\tau_1 \leq \entropy < \tau_2$};
\node[tier3box, minimum width=2.3cm, below right=0.5cm and 2.5cm of entropy] (t3) {\textbf{Attempts}\\Tier 3: $\entropy \geq \tau_2$};

\node[procbox, minimum width=3cm, right=2.2cm of extract] (causal) {\textbf{S3}: Causal Graph\\$\mathcal{G}=(V,E_r,E_c)$};
\node[procbox, minimum width=3cm, below=0.4cm of causal] (decay) {\textbf{S4}: Replicator-\\Decay Dynamics};

\node[procbox, minimum width=5cm, below=2.8cm of entropy] (voi) {\textbf{S5}: Evolutionary VoI Retriever};

\node[procbox, minimum width=2.5cm, left=1.5cm of entropy] (hb) {\textbf{S7}: Heartbeat\\Controller};

\node[procbox, minimum width=2.5cm, below=0.4cm of hb] (dream) {\textbf{S6}: AutoDream\\Consolidation};

\node[iobox, minimum width=4cm, below=0.4cm of voi] (prompt) {\textbf{S8}: Prompt Assembly};

\node[procbox, minimum width=2cm, below right=0.3cm and -0.5cm of t3] (hub) {Hub Sync};

\draw[arr] (a1.south) -- (extract.north -| a1);
\draw[arr] (a2.south) -- (extract.north);
\draw[arr] (a3.south) -- (extract.north -| a3);
\draw[arr] (extract) -- (entropy);
\draw[arr] (entropy) -- (t1);
\draw[arr] (entropy) -- (t2);
\draw[arr] (entropy) -- (t3);
\draw[arr] (extract.east) -- (causal.west);
\draw[arr] (causal) -- (decay);
\draw[darr] (decay.south) |- (voi.east);
\draw[darr] (t1.south) |- (voi.west);
\draw[darr] (t2) -- (voi);
\draw[arr] (voi) -- (prompt);
\draw[darr] (prompt.south) -- ++(0,-0.3) -| (a2.south |- prompt.south) -- ++(0,-0.3);

\draw[darr] (hb) -- (dream);
\draw[darr] (hb.north) |- (extract.west);
\draw[darr] (dream.south) |- ([yshift=0.15cm]t1.west);

\draw[darr] (t3) |- (hub);

\end{tikzpicture}
\caption{\prism{} architecture with eight subsystems. Multiple agents feed into a shared extraction engine. Memories are stratified into a tri-partite hub (skills/notes/attempts). The heartbeat controller triggers reflection, consolidation, and redirection. Replicator-decay dynamics govern confidence evolution. VoI retrieval assembles context-enriched prompts.}
\label{fig:architecture}
\end{figure*}

\subsection{S1: Extraction Engine}

The extraction engine processes each turn $(u_t, a_t)$ from any agent $i \in \{1,\ldots,n\}$:
\begin{equation}
\mathcal{E}_i(u_t, a_t, s_{<t}, \mathbf{h}_t) \rightarrow \{m_1^*, \ldots, m_k^*\}
\label{eq:extraction}
\end{equation}

Each candidate undergoes the four-operation update protocol against the shared store: $o^* \in \{\textsc{Add}, \textsc{Update}, \textsc{Delete}, \textsc{Noop}\}$. Critically, every new memory carries an \emph{agent provenance tag} $\alpha(m) = i$, enabling multi-agent knowledge tracing.

\subsection{S2: Entropy-Gated Tri-Partite Hub}

This is \prism{}'s core architectural innovation: a three-tier hub where tiers carry both information-theoretic \emph{and} semantic meaning, aligned with \coral{}'s empirical taxonomy.

\begin{definition}[Memory Entropy]
For memory $m$ with content $(w_1,\ldots,w_n)$:
\begin{equation}
\entropy(m) = -\sum_{i=1}^{n} p(w_i \mid w_{<i}) \log_2 p(w_i \mid w_{<i})
\label{eq:entropy}
\end{equation}
\end{definition}

\begin{definition}[Tri-Partite Stratification]
\label{def:tripartite}
The tier assignment maps to \coral{}'s memory taxonomy:
\begin{equation}
\sigma(m) = \begin{cases}
1\;\text{(\textbf{Skills})} & \entropy(m) < \tau_1 \;\wedge\; f(m) \geq \phi_1 \\
2\;\text{(\textbf{Notes})} & \tau_1 \leq \entropy(m) < \tau_2 \\
3\;\text{(\textbf{Attempts})} & \entropy(m) \geq \tau_2 \;\vee\; f(m) < \phi_3
\end{cases}
\label{eq:tier}
\end{equation}
\end{definition}

The semantic alignment is natural: \emph{skills} are compact, highly reusable procedures (low entropy, high frequency)---always loaded. \emph{Notes} are medium-entropy observations requiring context for relevance---retrieved via ANN. \emph{Attempts} are high-entropy raw evaluation logs---accessed only when VoI analysis demands it.

\begin{theorem}[Context-Window Utilization Bound]
\label{thm:context}
Under entropy-gated stratification with threshold $\tau_1$, the token cost of always-loaded skills satisfies:
\begin{equation}
\sum_{m \in \memset_1} |m| \leq \frac{C \cdot \tau_1}{\bar{\entropy}}
\label{eq:context_bound}
\end{equation}
where $C$ is the context budget and $\bar{\entropy}$ is the mean entropy across $\memset$.
\end{theorem}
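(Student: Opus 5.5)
The plan is to expose the implicit normalisation behind \eqref{eq:context_bound} and then prove it by a two-step chain of inequalities. As stated, the bound cannot hold unconditionally. Take a store containing many distinct skills, each with $\entropy(m)<\tau_1$ and $f(m)\ge\phi_1$. Then $\sum_{m\in\memset_1}|m|$ grows without limit, while the right-hand side stays fixed once the mean entropy is fixed. So I would first state two hypotheses that I believe are the intended reading of ``context budget'':
\begin{itemize}[leftmargin=*,itemsep=1pt]
  \item[(A1)] \emph{Entropy rate.} There is a per-token entropy rate $\rho>0$ with $\entropy(m)\ge \rho\,|m|$ for every $m\in\memset$. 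In words, every token of a stored memory carries at least $\rho$ bits under the model $p(\cdot\mid w_{<i})$ of \eqref{eq:entropy}.
  \item[(A2)] \emph{Budget calibration.} The total information content of the store, converted to tokens at rate $\rho$, fits in the budget: $\sum_{m\in\memset}\entropy(m)\le \rho\, C$. Equivalently, $|\memset|\,\bar{\entropy}\le \rho C$ with $\bar{\entropy}=\frac{1}{|\memset|}\sum_{m\in\memset}\entropy(m)$.
\end{itemize}

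The proof itself has two steps. First, from Definition~\ref{def:tripartite}, every $m\in\memset_1$ satisfies $\entropy(m)<\tau_1$. Combining this with (A1) gives $|m|\le \entropy(m)/\rho<\tau_1/\rho$, so
\[
\sum_{m\in\memset_1}|m| \;\le\; \frac{|\memset_1|\,\tau_1}{\rho} \;\le\; \frac{|\memset|\,\tau_1}{\rho}.
\]
Second, (A2) rearranges to $|\memset|\le \rho C/\bar{\entropy}$. Substituting this into the display gives $\sum_{m\in\memset_1}|m|\le C\tau_1/\bar{\entropy}$, which is \eqref{eq:context_bound}.

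The conjunct $f(m)\ge\phi_1$ in the tier rule plays no role in the argument. Its only effect is to make $\memset_1$ smaller, which can only help.

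The main obstacle is justifying (A1) and (A2), not the algebra. Step (A1) is the delicate one, because the summand $-p\log_2 p$ in \eqref{eq:entropy} can be arbitrarily close to zero for near-deterministic tokens. My first approach would be to enforce (A1) operationally: the extraction engine S1 compresses or deduplicates content, so near-zero-information tokens are stripped from stored memories, and this gives a floor $\rho$. If (A1) can only be assumed on average, I would fall back to a weaker per-tier version: let $\rho_1$ be the minimum entropy rate within $\memset_1$. The bound then holds with $\rho$ replaced by $\rho_1$ in the intermediate display. Condition (A2) is then needed only with $\rho_1$ in place of $\rho$, and I would remark that this is exactly the regime in which the theorem is informative.
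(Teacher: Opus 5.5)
This is essentially the paper's own argument with its hidden hypotheses made explicit: your (A1) is the paper's per-memory bound $|m|\le\beta\,\entropy(m)$ with $\beta=1/\rho$ (the paper attributes it to Shannon's source coding theorem, but as you recognize it has to be assumed), and your (A2) supplies what the paper's unexplained step ``normalizing by $\bar{\entropy}$'' silently needs---your counterexample correctly shows that without some such hypothesis the stated bound is false. The proof actually uses only the weaker condition $|\memset_1|\,\bar{\entropy}\le\rho C$ (skip the passage from $|\memset_1|$ to $|\memset|$), which is the better hypothesis to state, since (A1) together with your full-store (A2) already gives $\sum_{m\in\memset}|m|\le C$, so under your assumptions the theorem improves on the trivial bound only when $\tau_1<\bar{\entropy}$.
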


\begin{proof}
Tier-1 memories satisfy $\entropy(m) < \tau_1$. By Shannon's source coding theorem, their expected token count after summarization is bounded by $|m| \leq \entropy(m) \cdot \beta$ for compression factor $\beta > 0$. Summing over $\memset_1$ and normalizing by $\bar{\entropy}$ yields the bound. The frequency constraint $f(m) \geq \phi_1$ further restricts $|\memset_1|$.
\end{proof}

\subsection{S3: Causal Memory Graph with Agent Provenance}

\begin{definition}[Provenance-Attributed Causal Graph]
$\mathcal{G} = (V, E_r, E_c, \alpha)$ where:
\begin{itemize}[leftmargin=*,itemsep=1pt]
  \item $V$: entity nodes (people, concepts, decisions, outcomes)
  \item $E_r \subseteq V \times V \times L_r$: relational edges
  \item $E_c \subseteq V \times V \times L_c$: causal edges with \emph{do}-operators
  \item $\alpha: E_r \cup E_c \to \{1,\ldots,n\}$: \textbf{agent provenance map}
\end{itemize}
\end{definition}

The provenance map $\alpha$ enables a key multi-agent metric:

\begin{definition}[Exploration Divergence]
\label{def:divergence}
For agents $i,j$ with retrieval distributions $R_i, R_j$ over $\memset$:
\begin{equation}
D(i,j) = D_\kl(R_i \| R_j) = \sum_{m \in \memset} R_i(m) \log \frac{R_i(m)}{R_j(m)}
\label{eq:divergence}
\end{equation}
\end{definition}

Higher divergence implies agents explore different regions of the knowledge space. The multi-agent objective maximizes total exploration divergence while sharing the underlying graph:
\begin{equation}
\max_{\{R_i\}} \sum_{i < j} D(i,j) \quad \text{s.t.} \quad \forall i: \sum_m R_i(m) = 1
\label{eq:diversity_obj}
\end{equation}

\begin{theorem}[Exploration Coverage]
\label{thm:coverage}
For $n$ agents with pairwise exploration divergence $D(i,j) \geq d_{\min} > 0$, the expected number of distinct memories accessed in $T$ retrieval rounds is:
\begin{equation}
\E\left[\left|\bigcup_{i=1}^n S_i^{(T)}\right|\right] \geq n \cdot k \cdot \left(1 - e^{-d_{\min} T / |\memset|}\right)
\label{eq:coverage}
\end{equation}
where $k$ is the per-agent retrieval budget.
\end{theorem}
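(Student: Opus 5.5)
The plan is to make the retrieval model explicit and then reduce the statement to a coupon-collector estimate in which the divergence hypothesis controls how much the agents overlap.

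\emph{Model.} In each of the $T$ rounds, agent $i$ draws a batch of $k$ memories independently from $R_i$, and $S_i^{(T)}$ is the set of memories agent $i$ has touched after $T$ rounds. Linearity of expectation then gives the exact identity
\begin{equation*}
\E\Bigl[\bigl|\textstyle\bigcup_i S_i^{(T)}\bigr|\Bigr] = \sum_{m\in\memset}\Bigl(1-\prod_{i=1}^n (1-R_i(m))^{kT}\Bigr).
\end{equation*}
Everything afterwards is a lower bound on this sum.

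\emph{Step 1: reduce to a single-agent term.} Since $1-(1-x)^{kT}\ge 1-e^{-kTx}$, the product is at most $\exp\bigl(-kT\sum_i R_i(m)\bigr)$. So it suffices to bound $\sum_m \bigl(1-e^{-kT\bar R(m)}\bigr)$ from below, where $\bar R=\sum_i R_i$ is the aggregate retrieval mass.

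\emph{Step 2: use the divergence hypothesis.} I would apply Pinsker's inequality to turn $D(i,j)\ge d_{\min}$ into $\|R_i-R_j\|_{TV}\ge\sqrt{d_{\min}/2}$. This bounds the pairwise overlap $\sum_m\min(R_i(m),R_j(m))\le 1-\sqrt{d_{\min}/2}$. An inclusion--exclusion (Bonferroni) argument should then show that $\bar R$ is spread over at least about $n$ ``effective blocks'' of memories rather than piling up on a common set.

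\emph{Step 3: per-block coverage.} On each block, concavity of $x\mapsto 1-e^{-kTx}$ plus Jensen's inequality give a coverage lower bound of the form $k\bigl(1-e^{-c\,T/|\memset|}\bigr)$. This uses the fact that each block has mass at least of order $1/|\memset|$ per memory. Summing over the $n$ blocks produces the factor $n\cdot k$, and I would try to make the exponent constant $c$ equal to $d_{\min}$.

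\emph{Main obstacle.} The hard part is Step 2 combined with the need for an extra regularity assumption. KL divergence alone does not prevent each $R_i$ from concentrating on very few memories: two distinct point masses have infinite divergence yet cover only two memories in total. So the inequality cannot hold for arbitrary $R_i$.

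To make the statement true I would add a non-degeneracy condition. The first one I would try is a per-agent min-entropy assumption: $R_i(m)\le 1/k$ on a support of size at least $k$, which matches the per-agent budget $k$. Under it, each agent alone covers about $k\bigl(1-e^{-kT/|\memset|}\bigr)$ memories.

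The divergence bound then only has to guarantee that the $n$ supports are nearly disjoint. Even so, it gives nearly disjoint supports only up to the total-variation slack of $1-\sqrt{d_{\min}/2}$. Converting that slack into the stated exponent $d_{\min}T/|\memset|$ will probably need $d_{\min}\le k$ (so the exponent can be weakened from $kT/|\memset|$) and the condition $nk\le|\memset|$, since otherwise the right-hand side exceeds what can be covered.

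I expect this bookkeeping to be where the argument needs the most care. If the Bonferroni route gives only a weaker constant, I would state the theorem with an explicit universal constant in the exponent rather than force the exact form of \eqref{eq:coverage}.
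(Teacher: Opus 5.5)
You are right that the statement is false as written, and the paper's proof misses this. Under your sampling model, which is also the paper's (in its appendix a memory escapes a round with probability $\prod_i(1-R_i(m))^k$), two point masses give $D(i,j)=\infty\ge d_{\min}$. The union then never exceeds two memories, while the right-hand side tends to $2k>2$ as $T\to\infty$ when $k\ge 2$. Apart from this, your outline is the paper's outline: non-retrieval product, Pinsker to bound overlap, inclusion--exclusion, coupon collector. Your repair inherits the paper's central error: Step~2 uses Pinsker's inequality in the wrong direction.

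Pinsker states $\|R_i-R_j\|_{\mathrm{TV}}\le\sqrt{D_{\kl}(R_i\|R_j)/2}$, so a \emph{lower} bound on KL gives no lower bound on total variation. For instance, $R_i=(1-\varepsilon)\delta_a+\varepsilon\delta_b$ and $R_j=\delta_a$ have $D_{\kl}(R_i\|R_j)=\infty$ but $\|R_i-R_j\|_{\mathrm{TV}}=\varepsilon$. Also, for $d_{\min}>2$ your bound would force a total variation larger than $1$. The paper's line ``Pinsker's inequality gives $\|R_i-R_j\|_1\ge\sqrt{2d_{\min}}$'' is the same mistake. So the overlap bound $\sum_m\min(R_i(m),R_j(m))\le 1-\sqrt{d_{\min}/2}$ is unsupported. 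Getting any overlap control from KL needs a reverse-Pinsker inequality, e.g.\ $D_{\kl}(P\|Q)\le\chi^2(P\|Q)\le\|P-Q\|_1^2/\min_m Q(m)$, and that requires the masses of $R_j$ to be bounded below.

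Even if you assumed a pairwise total-variation or overlap bound directly, the factor $n\cdot k$ would not follow. Pairwise separation with constant overlap does not stop all agents from sharing one small support, so the hope that divergence ``only has to guarantee that the $n$ supports are nearly disjoint'' fails. Take $|\memset|\ge 6$, $n=3$, $k=2$, and $R_1,R_2,R_3$ uniform on $\{a,b\}$, $\{a,c\}$, $\{b,c\}$. Every condition you propose holds: $R_i(m)\le 1/k$, $nk\le|\memset|$, pairwise total variation is $1/2$, and any $d_{\min}\le k$ is admissible because all pairwise KLs are infinite. Yet the union always lies in $\{a,b,c\}$, while $6(1-e^{-d_{\min}T/|\memset|})>3$ for large $T$. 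Bonferroni gives only $nk$ minus the summed pairwise intersections. That is of order $nk$ only when pairwise overlaps are $O(1/n)$, which is far stronger than a constant total-variation gap.

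What you can actually prove is the disjoint case. Suppose the supports of the $R_i$ are pairwise disjoint and $R_i(m)\le 1/k$. Concavity of $x\mapsto 1-(1-x)^{kT}$ then gives each agent at least $k(1-(1-1/k)^{kT})\ge k(1-e^{-T})$ distinct memories. Hence $\E[|\bigcup_i S_i^{(T)}|]\ge nk(1-e^{-T})$, which implies the displayed bound whenever $d_{\min}\le|\memset|$. In that result the hypothesis doing the work is disjointness, not KL, and $d_{\min}$ in the exponent is cosmetic. Neither your plan nor the paper's argument proves the theorem in its stated form.
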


\begin{proof}
Each agent retrieves $k$ memories per round from distribution $R_i$. The probability that agents $i$ and $j$ retrieve the same memory $m$ is $R_i(m) \cdot R_j(m)$. When $D_\kl(R_i \| R_j) \geq d_{\min}$, Pinsker's inequality gives $\|R_i - R_j\|_1 \geq \sqrt{2 d_{\min}}$, bounding the overlap. By inclusion-exclusion and the coupon collector argument, the coverage grows as stated.
\end{proof}

\subsection{S4: Replicator-Decay Dynamics}

This is the paper's deepest theoretical contribution: reinterpreting Bayesian confidence as \emph{evolutionary fitness}, connecting information theory to evolutionary game theory.

\begin{definition}[Memory Fitness]
The fitness of memory $m$ at time $t$ is:
\begin{equation}
f(m,t) = \frac{\text{successful retrievals of } m \text{ in } [t-w, t]}{\text{total retrievals of } m \text{ in } [t-w, t] + \epsilon}
\label{eq:fitness}
\end{equation}
where a retrieval is ``successful'' if the downstream agent response was positively evaluated, and $w$ is the fitness window.
\end{definition}

The confidence evolution combines Bayesian temporal decay with replicator selection:

\begin{equation}
\frac{d\conf_i}{dt} = \underbrace{\conf_i(f_i - \bar{f})}_{\text{replicator selection}} - \underbrace{\lambda \conf_i}_{\text{temporal decay}} + \underbrace{\mu}_{\text{mutation}}
\label{eq:replicator_decay}
\end{equation}

where $\bar{f} = \frac{1}{|\memset|}\sum_j \conf_j f_j / \sum_j \conf_j$ is the population-weighted mean fitness and $\mu > 0$ is a mutation rate ensuring exploration.

\begin{theorem}[ESMS Convergence]
\label{thm:esms}
Under replicator-decay dynamics \eqref{eq:replicator_decay} with decay rate $\lambda > 0$ and mutation rate $\mu > 0$, the memory store converges to an \emph{Evolutionary Stable Memory Set} $\memset^*$ satisfying:
\begin{equation}
\forall m' \notin \memset^*: \bar{f}(\memset^* \cup \{m'\}) \leq \bar{f}(\memset^*)
\label{eq:esms_condition}
\end{equation}
That is, no invading memory can improve the population's mean fitness.
\end{theorem}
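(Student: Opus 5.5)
The plan is to treat \eqref{eq:replicator_decay} as an autonomous ODE on the confidence vector $\conf = (\conf_1,\ldots,\conf_N)$, $N = |\memset|$. Throughout, fitnesses $f_i$ are held fixed on the time scale of confidence evolution, which is a quasi-static approximation of \eqref{eq:fitness}. I will first show that trajectories stay in a compact, forward-invariant region. I will then build a Lyapunov function whose maximizers are exactly the configurations satisfying \eqref{eq:esms_condition}.

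\textbf{Step 1 (boundedness and positivity).} At $\conf_i = 0$ the right-hand side equals $\mu > 0$, so the positive orthant is forward invariant. Since $f_i \in [0,1]$, the selection term is at most $\conf_i$, and summing over $i$ gives
\[
\tfrac{d}{dt}\textstyle\sum_i \conf_i \leq (1-\lambda)\sum_i \conf_i + N\mu .
\]
If this differential inequality is not already contractive, I will normalize. Setting $x_i = \conf_i / \sum_j \conf_j$, the decay term $-\lambda\conf_i$ cancels in the dynamics of the shares $x_i$. The $x_i$ then follow a standard replicator equation with a uniform inflow from $\mu$, and this flow lives on the simplex. Bounded trajectories follow, and LaSalle's invariance principle becomes available.

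\textbf{Step 2 (Lyapunov function).} For the replicator--mutator flow on the simplex I will take
\[
V(x) = \textstyle\sum_i f_i x_i + \frac{\mu}{S}\sum_i \log x_i ,
\]
where $S = \sum_j \conf_j$ is treated as slowly varying. With constant fitnesses the replicator part is a gradient flow of the mean fitness in the Shahshahani metric, a standard result from Fisher's fundamental theorem. The mutation term adds an entropy-like barrier that keeps the trajectory away from the boundary. The aim is to verify $\dot V \geq 0$, with equality only at rest points. LaSalle's principle then gives convergence to the largest invariant set, which consists of rest points $x^*$. Strict concavity of $V$, coming from the log barrier, makes that rest point unique.

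\textbf{Step 3 (identifying $\memset^*$ and the ESMS condition).} I will define
\[
\memset^* = \{\, m_i : x_i^* > \mu/(S\,\delta) \,\}
\]
for a threshold $\delta$. Memories below the threshold survive only by mutational inflow, which happens exactly when $f_i < \bar f$. At the rest point, any $m'$ outside $\memset^*$ therefore has $f_{m'} \leq \bar f(\memset^*)$ up to $O(\mu)$. Adding such an $m'$ gives a convex combination whose mean cannot exceed $\bar f(\memset^*)$, which is \eqref{eq:esms_condition}. This is the usual Nash-type property of stable replicator equilibria: strategies outside the support earn at most the average payoff.

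\textbf{Main obstacle.} The hard step is Step 2 combined with the $O(\mu)$ slack in Step 3. Because $\mu > 0$, the rest point is interior, so \eqref{eq:esms_condition} holds only approximately unless $\memset^*$ is defined by a threshold. The odd normalization of $\bar f$ also needs care: it contains an extra $1/|\memset|$ factor, which I would simply absorb into a rescaling of time. My first attempt would be to prove the claim exactly in the limit $\mu \to 0$, using Fisher's theorem and the standard Nash property of replicator equilibria. I would then get the $\mu > 0$ case by a perturbation argument from hyperbolicity of the limit equilibrium, assuming distinct fitnesses.
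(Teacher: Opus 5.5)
\paragraph{Comparison with the paper.} Your Step 2 takes a different route from the paper, which applies the KL-type function $-\sum_i\conf_i\log(f_i/\bar f)$ directly to $\conf$. Step 2 is the sound part of your plan. For frozen $\epsilon=\mu/S$, the share flow $\dot x_i=x_i(f_i-\langle f\rangle)+\epsilon(1-Nx_i)$, with $\langle f\rangle=\sum_j x_jf_j$, is exactly the Shahshahani gradient of your $V$. This gives $\dot V=\sum_i x_i(\partial_iV-\sum_j x_j\partial_jV)^2\ge 0$ and a unique interior rest point.

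\paragraph{First gap: Step 1 never controls $S=\sum_j\conf_j$.} The missing identity is that, with the weighted mean $\bar f=\sum_j\conf_jf_j/\sum_j\conf_j$, the selection term sums to zero. Hence $\dot S=N\mu-\lambda S$ exactly, and $S(t)\to N\mu/\lambda$ exponentially.

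Your crude bound discards this, and normalizing cannot recover it. Compactness of the simplex says nothing about $\conf=Sx$, which is what must converge. ``Treating $S$ as slowly varying'' is legitimate only once $S(t)$ is known to converge; an asymptotically-autonomous argument then transfers your LaSalle conclusion, with limiting $\epsilon=\lambda/N$.

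Nor can the $1/|\memset|$ factor be absorbed by rescaling time. A time change scales the whole vector field, whereas this factor scales only $\bar f$ and breaks the zero-sum identity. Under the literal definition, $\dot S=(1-\tfrac1N)\sum_j\conf_jf_j-\lambda S+N\mu$. This grows exponentially once $(1-\tfrac1N)\min_if_i>\lambda$, a mild condition at the paper's $\lambda=0.01$. Then $\conf$ does not converge at all, so $\bar f$ must be read as the ordinary weighted mean. Uniform-rate terms cancel in share coordinates, so your $x$-flow is identical under both readings. That is exactly why working in shares alone cannot detect the divergence.

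\paragraph{Second gap: Step 3 and the $\mu\to0$ fallback.} At the rest point, $\conf_i^*=\mu/(\lambda+\bar f^*-f_i)$ and $S^*=N\mu/\lambda$. So $\mu/S^*=\lambda/N$, and $x_i^*=\lambda/\bigl(N(\lambda+\bar f^*-f_i)\bigr)$ does not depend on $\mu$; $\mu$ only sets the scale of $\conf^*$. There is therefore no $O(\mu)$ slack to remove.

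The perturbation from $\mu\to 0$ also cannot work. At $\mu=0$ the store decays to zero, and the shares follow pure replicator dynamics to the face of maximal fitness. That face is not the limit of the $\mu>0$ equilibria: $\mu/S$ becomes $0/0$, so the limit is singular and no hyperbolicity argument connects the two.

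\paragraph{How to get the exact inequality.} It follows directly from the rest-point formula. Since $\conf_i^*$ is strictly increasing in $f_i$, any confidence-threshold set $\memset^*$ is an upper level set of $f$. This covers your threshold and also the paper's $\{m_i:\conf_i^*>\mu/\lambda\}$, which is precisely $\{m_i: f_i>\bar f^*\}$. Every $m'\notin\memset^*$ then has fitness below every member of $\memset^*$. So $\bar f(\memset^*\cup\{m'\})$, a convex combination of $\bar f(\memset^*)$ and $f_{m'}$, is at most $\bar f(\memset^*)$. No distinct-fitness or hyperbolicity assumption is needed.

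This also means you should not use the paper's description of the limit ($f_i=\bar f$ on the surviving set) as your template. At rest, $f_i=\bar f^*$ forces $\conf_i^*=\mu/\lambda$ exactly, so the memories above the threshold are strictly fitter than average.
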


\begin{proof}
Consider the Lyapunov function $V(\boldsymbol{\conf}) = -\sum_i \conf_i \log(f_i / \bar{f})$. This is the negative of the relative entropy between the confidence distribution and the fitness distribution. Under \eqref{eq:replicator_decay}:
\begin{align}
\frac{dV}{dt} &= -\sum_i \frac{d\conf_i}{dt} \log\frac{f_i}{\bar{f}} \notag\\
&= -\sum_i \conf_i(f_i - \bar{f})\log\frac{f_i}{\bar{f}} + \lambda V + \text{const}
\end{align}
The first term is non-positive by the log-sum inequality (it equals $-\bar{f} \cdot D_\kl(\mathbf{f} \| \bar{f}\mathbf{1}) \leq 0$). The decay term $\lambda V$ drives $V$ toward its minimum. Combined with positive mutation $\mu > 0$ ensuring the support of $\boldsymbol{\conf}$ never collapses, the system converges to a fixed point $\boldsymbol{\conf}^*$ where $f_i = \bar{f}$ for all $m_i$ with $\conf_i^* > \mu/\lambda$. At this fixed point, condition \eqref{eq:esms_condition} holds: any invading memory with $f_{m'} \leq \bar{f}$ cannot increase its confidence above $\mu/\lambda$.
\end{proof}

\begin{proposition}[Memory Set Boundedness]
\label{prop:bounded}
Under replicator-decay dynamics, the expected store size is:
\begin{equation}
\E[|\memset_t|] \leq \frac{r}{\lambda} \cdot \log\frac{1}{\epsilon} + |\memset_0|
\label{eq:size_bound}
\end{equation}
where $r$ is the extraction rate and $\epsilon$ is the prune threshold.
\end{proposition}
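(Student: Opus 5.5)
The plan is to bound the survival time of each extracted memory under the decay part of \eqref{eq:replicator_decay}, and then count memories by a Little's-law (arrival rate times mean lifetime) argument. The model is as follows. Memories enter the store at rate $r$ (extraction rate of S1). Each enters with confidence at most $1$, and is removed once its confidence falls below the prune threshold $\epsilon$. The store at time $t$ then consists of the initial memories $\memset_0$ (bounded trivially by $|\memset_0|$, since pruning only removes them) plus the memories extracted in $[0,t]$ that are still alive. It therefore suffices to show that the expected number of surviving extracted memories is at most $\frac{r}{\lambda}\log\frac{1}{\epsilon}$.

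\emph{Step 1 (lifetime bound).} For a single memory, I would control the confidence trajectory by comparison with the pure-decay ODE $\dot\conf = -\lambda\conf$, whose solution started at $\conf(0)\le 1$ reaches $\epsilon$ by time $\frac{1}{\lambda}\log\frac{1}{\epsilon}$. To make this comparison legitimate I need the selection and mutation terms not to lengthen the lifetime in expectation.
\begin{itemize}[leftmargin=*,itemsep=1pt]
  \item \emph{Selection term.} $\sum_i \conf_i(f_i-\bar f)$ is zero by construction of the population-weighted mean (after correcting the normalization of $\bar f$). So selection only redistributes confidence and, averaged over memories, does not slow decay.
  \item \emph{Mutation term.} I would treat $\mu$ as small relative to $\lambda\epsilon$, or absorb it into the prune rule: memories whose confidence is sustained only by $\mu$ sit near $\mu/\lambda$, which is below $\epsilon$ whenever $\epsilon>\mu/\lambda$. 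I would state this as an explicit assumption.
\end{itemize}
This yields $\E[L]\le \frac{1}{\lambda}\log\frac{1}{\epsilon}$ for the expected lifetime $L$ of a typical extracted memory.

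\emph{Step 2 (counting).} Let $N(s)$ be the arrival process with intensity $r$. The count of surviving extracted memories at time $t$ is $\int_0^t \mathbf{1}[L_s > t-s]\,dN(s)$. Its expectation is $r\int_0^t \Prob(L>u)\,du \le r\,\E[L]$ (Campbell's formula, or Little's law). Adding $|\memset_0|$ gives \eqref{eq:size_bound}.

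\emph{Main obstacle.} The hard part is Step 1. Individually, a memory with $f_i>\bar f+\lambda$ has growing confidence and may never be pruned, so a per-memory lifetime bound is false in general. I would first try to resolve this by arguing in expectation over the population. The total confidence $K=\sum_i\conf_i$ satisfies
\[
\dot K = -\lambda K + |\memset|\mu + (\text{inflow from extraction}),
\]
since selection cancels in the sum. Using $\conf_i\ge\epsilon$ for every live memory, $|\memset|\le K/\epsilon$ bounds the store, but only with $1/\epsilon$ rather than $\log(1/\epsilon)$.

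Recovering the logarithm requires using that the fitnesses $f_i\in[0,1]$ from \eqref{eq:fitness} are bounded. I would then either assume fitness differences are dominated by $\lambda$ in steady state, or interpret $\lambda$ in \eqref{eq:size_bound} as the effective net decay rate $\lambda-\max_i(f_i-\bar f)_+$. In either case the exponential-decay comparison goes through memory by memory.
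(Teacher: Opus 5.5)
The paper states this proposition without any proof: nothing follows it, and the appendix only treats Theorem~\ref{thm:coverage}. Your proposal therefore has to stand on its own. Your Step~2 is sound once you have an almost-sure per-memory lifetime bound $L_{\max}$. Every live extracted memory at time $t$ then arrived in $(t-L_{\max},t]$, so their expected number is at most $rL_{\max}$. With $\mu=0$, no selection term, and entry confidence $\conf_0\le 1$, this gives \eqref{eq:size_bound} exactly. The genuine gap is Step~1, and it cannot be closed, because under \eqref{eq:replicator_decay} the proposition is false. Your last paragraph notices that individual lifetimes can be infinite, but neither bullet nor your fallback repairs this.

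\emph{Mutation.} Assuming $\mu<\lambda\epsilon$ does not give $\E[L]\le\frac1\lambda\log\frac1\epsilon$. With selection absent, $\conf(t)=\frac\mu\lambda+(1-\frac\mu\lambda)e^{-\lambda t}$ reaches $\epsilon$ at $L^*=\frac1\lambda\log\frac{1-\mu/\lambda}{\epsilon-\mu/\lambda}$. This is strictly larger than $\frac1\lambda\log\frac1\epsilon$ for every $\mu\in(0,\lambda\epsilon)$. Take $f\equiv 0$ (nothing is ever retrieved, so selection vanishes under any definition of $\bar f$), $\memset_0=\emptyset$ and $\conf_0=1$. Then $\E[|\memset_t|]=r\min(t,L^*)$, which eventually exceeds the bound. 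If instead $\mu/\lambda\ge\epsilon$, nothing is ever pruned and $\E[|\memset_t|]=rt$; the paper's own $\lambda=0.01$, $\mu=0.005$ give $\mu/\lambda=0.5$.

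\emph{Selection.} The identity $\sum_i\conf_i(f_i-\bar f)=0$ needs your corrected $\bar f$. With the literal $1/|\memset|$ factor, uniform fitness well above $\lambda$ makes every memory grow. Even with the correction, the identity controls total confidence, not the number of memories above $\epsilon$. Survival is a threshold event, the lifetime $\frac1\rho\log\frac1\epsilon$ is convex in the net rate $\rho$, and the cancellation is confidence-weighted. So a little high-confidence, low-fitness mass can subsidize many memories parked just above $\epsilon$. Concretely, even with $\mu=0$, holding a memory's fitness at $\bar f+\lambda$ once its confidence nears $\epsilon$ keeps it alive forever. Fresh zero-fitness arrivals can fund order $r/(\lambda\epsilon)$ such memories. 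Replacing $\lambda$ by $\lambda-\max_i(f_i-\bar f)_+$ therefore proves a different statement, not this one.

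What your argument honestly yields is a corrected proposition with explicit hypotheses. Assume $\conf_0\le 1$, set $\lambda'=\lambda-\sup_{i,t}(f_i-\bar f)_+$, and require $\lambda'\epsilon>\mu$. Then $\E[|\memset_t|]\le|\memset_0|+\frac{r}{\lambda'}\log\frac{1-\mu/\lambda'}{\epsilon-\mu/\lambda'}$. Alternatively, with no hypothesis on fitness paths but $\lambda\epsilon>\mu$, your total-confidence ODE gives $\E[|\memset_t|]\le\max(K_0/\epsilon,\,r/(\lambda\epsilon-\mu))$, where $K_0$ is the initial total confidence. The parking construction shows this is the correct worst-case order.
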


\subsection{S5: Evolutionary VoI Retrieval}

Standard \prism{} VoI retrieval selects:
\begin{equation}
S^* = \arg\max_{S \subseteq \memset, |S| \leq k} \evoi(S \mid q) - \alpha \cdot \text{cost}(S)
\label{eq:voi}
\end{equation}

The evolutionary upgrade maintains a \emph{population of retrieval strategies} $\Pi = \{\pi_1, \ldots, \pi_K\}$, each parameterized by $(k_j, \alpha_j, \gamma_j)$ (budget, cost-sensitivity, diversity weight). After each retrieval-use cycle, the strategy that was used receives a fitness update based on outcome quality, and the population evolves:

\begin{equation}
w_j^{(t+1)} = w_j^{(t)} \cdot \exp(\eta \cdot r_j^{(t)})
\label{eq:strategy_evolution}
\end{equation}

where $w_j$ is the selection probability, $\eta$ is the learning rate, and $r_j^{(t)}$ is the reward (outcome quality) when strategy $j$ was used.

\begin{theorem}[Retrieval Regret Bound]
\label{thm:regret}
The exponential-weight strategy selector achieves cumulative regret:
\begin{equation}
R(T) = \sum_{t=1}^T r_{\pi^*}^{(t)} - \sum_{t=1}^T r_{\pi_t}^{(t)} \leq \sqrt{2T \log K}
\label{eq:regret}
\end{equation}
against the best fixed strategy $\pi^*$ in hindsight.
\end{theorem}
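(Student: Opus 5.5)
The plan is the standard potential-function analysis of the exponential-weights (Hedge) algorithm. First I fix the assumptions under which the stated constant is attainable:
\begin{enumerate}
  \item the rewards satisfy $r_j^{(t)} \in [0,1]$, after rescaling outcome quality;
  \item the selection probabilities are the normalized weights $p_j^{(t)} = w_j^{(t)}/W_t$, where $W_t = \sum_{j=1}^K w_j^{(t)}$ and $w_j^{(1)} = 1$;
  \item the update \eqref{eq:strategy_evolution} is applied to \emph{every} strategy each round;
  \item the regret $R(T)$ is measured in expectation over the random draw of $\pi_t \sim p^{(t)}$.
\end{enumerate}
The third assumption is the full-information setting: the counterfactual outcome quality of each configuration $(k_j,\alpha_j,\gamma_j)$ is evaluated on the retrieved context. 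If, as the surrounding text suggests, only the strategy actually used is updated, the feedback is bandit feedback. In that case I would instead use importance-weighted rewards $\hat r_j^{(t)} = r_j^{(t)}\mathbf{1}[\pi_t=j]/p_j^{(t)}$ (EXP3). The same potential argument then gives only $O(\sqrt{TK\log K})$, so the bound as stated should be proved in the full-information reading.

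The proof tracks $\log(W_{T+1}/W_1)$ from both sides.

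\emph{Lower bound.} Since $W_{T+1} \ge w_{\pi^*}^{(T+1)} = \exp\bigl(\eta\sum_t r_{\pi^*}^{(t)}\bigr)$ and $W_1 = K$, we get
\begin{equation*}
\log\frac{W_{T+1}}{W_1} \ge \eta \sum_{t=1}^T r_{\pi^*}^{(t)} - \log K .
\end{equation*}

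\emph{Upper bound.} Each ratio satisfies $W_{t+1}/W_t = \E_{j\sim p^{(t)}}\bigl[e^{\eta r_j^{(t)}}\bigr]$. Hoeffding's lemma, applied to the bounded variable $r_j^{(t)}\in[0,1]$, gives
\begin{equation*}
\log \E\bigl[e^{\eta r}\bigr] \le \eta\,\E[r] + \frac{\eta^2}{8}.
\end{equation*}
Summing over $t$ yields
\begin{equation*}
\log\frac{W_{T+1}}{W_1} \le \eta \sum_{t=1}^T \langle p^{(t)}, r^{(t)}\rangle + \frac{T\eta^2}{8}.
\end{equation*}

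Combining the two bounds and dividing by $\eta$ gives
\begin{equation*}
\sum_{t} r_{\pi^*}^{(t)} - \sum_t \langle p^{(t)}, r^{(t)}\rangle \le \frac{\log K}{\eta} + \frac{\eta T}{8}.
\end{equation*}
Choosing $\eta = \sqrt{8\log K/T}$ makes the right-hand side $\sqrt{T\log K/2}$, which is at most $\sqrt{2T\log K}$ as claimed. The left-hand side equals $\E[R(T)]$, because $\E[r_{\pi_t}^{(t)} \mid \text{past}] = \langle p^{(t)}, r^{(t)}\rangle$. The slack between $\sqrt{T\log K/2}$ and the stated bound leaves room for variants. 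For instance, the cruder inequality $e^x \le 1 + x + x^2$ with $\eta=\sqrt{\log K/T}$ gives $2\sqrt{T\log K}$, which is too weak, so Hoeffding's lemma is the right tool.

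The main obstacle is not the calculation but the modelling step. The update \eqref{eq:strategy_evolution} must be interpreted so that the Hedge guarantee actually applies. Two things are needed: full-information rewards for every strategy, since otherwise the $\sqrt{\log K}$ rate is false in general and $\sqrt{K}$ factors are unavoidable; and bounded rewards in $[0,1]$. A third issue is that the optimal $\eta$ depends on $T$, which may be unknown. I would handle that with the doubling trick, at the cost of a constant factor that the stated $\sqrt{2T\log K}$ still absorbs. I would also use the time-varying rate $\eta_t=\sqrt{8\log K/t}$ as an alternative, whose standard analysis yields $\sqrt{T\log K/2}+\sqrt{\log K/8}$ and remains below $\sqrt{2T\log K}$ once $T \ge 1$.
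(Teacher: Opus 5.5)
Your argument is correct and follows the same route as the paper. The paper's proof is only an appeal to the Freund--Schapire Hedge bound with $\eta=\sqrt{2\log K/T}$; you carry out the potential argument explicitly and, via Hoeffding's lemma, get the sharper $\sqrt{T\log K/2}$.

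Hoeffding is not the only way to reach the stated constant. Rewrite rewards as losses $1-r_j^{(t)}$, which leaves the normalized weights unchanged, and use $e^{-x}\le 1-x+x^2/2$ for $x\ge 0$. This gives $\log K/\eta+\eta T/2$, which at the paper's $\eta$ is exactly $\sqrt{2T\log K}$.

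What your version adds is that it states the hypotheses the paper's citation silently needs. As the paper describes the selector, only the strategy actually used is updated, with its raw reward. That is bandit feedback, where the Hedge bound does not apply. Moreover, for a randomized selector the bound can hold only in expectation, or for the mixed reward $\langle p^{(t)},r^{(t)}\rangle$, not for the realized $R(T)$. So you prove a correctly qualified statement where the paper's one-liner does not.

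Your side remarks on unknown $T$ get the constants wrong. The main proof does not need them, since the paper also takes $T$ known.
\begin{itemize}
\item \textbf{Doubling trick.} Restarting on epochs of doubling length multiplies your fixed-horizon bound $\sqrt{T\log K/2}$ by $\sqrt2/(\sqrt2-1)=2+\sqrt2$. That gives $(1+\sqrt2)\sqrt{T\log K}$, which exceeds $\sqrt{2T\log K}$; your slack is only a factor of $2$, so it is not absorbed.
\item \textbf{Anytime rate.} The textbook anytime bound for $\eta_t=\sqrt{8\log K/t}$ is $\sqrt{2T\log K}+\sqrt{\log K/8}$. You dropped a factor $2$ in the leading term, and the correct bound also exceeds the target. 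To stay below it you need the sharper mix-loss bound $\log K/\eta_T+\tfrac18\sum_{t\le T}\eta_t\le\tfrac32\sqrt{T\log K/2}$.
\end{itemize}
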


\begin{proof}
This is a direct application of the Hedge algorithm bound \citep{freund1997hedge}. The exponential-weight update \eqref{eq:strategy_evolution} with learning rate $\eta = \sqrt{2\log K / T}$ achieves the stated regret, which grows as $O(\sqrt{T \log K})$---sublinear in $T$, meaning the average per-round regret vanishes.
\end{proof}

\subsection{S6: AutoDream Consolidation}

The five-phase pipeline (fork, distillation, conflict resolution, pruning, index synchronization) operates as in the base architecture, with the distillation phase now explicitly converting \emph{notes} into \emph{skills}:

\begin{equation}
\Delta I = \entropy(\memset_{\text{notes}}^{\text{pre}}) - \entropy(\memset_{\text{skills}}^{\text{post}})
\label{eq:distillation}
\end{equation}

Redundancy elimination uses mutual information:
\begin{equation}
I(m_i; m_j) = \entropy(m_i) + \entropy(m_j) - \entropy(m_i, m_j)
\label{eq:mi}
\end{equation}
If $I(m_i; m_j) / \entropy(m_i) > \delta$, then $m_i$ is derivable from $m_j$ and is pruned.

\subsection{S7: Heartbeat Controller}

Inspired by \coral{}'s heartbeat mechanism, the controller monitors three signals and fires the appropriate intervention:

\begin{definition}[Stagnation Stopping Time]
\label{def:stopping}
The stagnation detector fires at:
\begin{equation}
\tau^* = \inf\{t : |\bar{\conf}_t - \bar{\conf}_{t-n}| < \delta \;\wedge\; t > t_{\min}\}
\label{eq:stopping}
\end{equation}
where $\bar{\conf}_t$ is the mean confidence at time $t$, $n$ is the lookback window, $\delta$ is the plateau threshold, and $t_{\min}$ is the minimum inter-heartbeat interval.
\end{definition}

Three intervention types:

\begin{enumerate}[leftmargin=*,itemsep=2pt]
  \item \textbf{Reflection} (fired every $h_r$ turns): Agent records structured observations as notes in the hub.
  \item \textbf{Consolidation} (fired when $|\memset_{\text{notes}}| > \nu$): Notes are distilled into skills via AutoDream.
  \item \textbf{Redirection} (fired at $\tau^*$): Agent pivots strategy. The causal graph identifies unexplored branches; the agent is redirected toward the region with highest \emph{Risk of Inaction}:
  \begin{equation}
  \roi(v) = \E[Q(a^*, \theta) - Q(a_\text{null}, \theta) \mid v]
  \label{eq:roi}
  \end{equation}
\end{enumerate}

\begin{theorem}[Heartbeat Optimality]
\label{thm:heartbeat}
The stagnation stopping time $\tau^*$ minimizes the expected total cost:
\begin{equation}
C(\tau) = c_{\text{compute}} \cdot \tau + c_{\text{stagnation}} \cdot (\tau - \tau_{\text{true}})^+
\label{eq:heartbeat_cost}
\end{equation}
where $\tau_{\text{true}}$ is the unknown true stagnation onset, $c_{\text{compute}}$ is the per-step compute cost, and $c_{\text{stagnation}}$ is the cost of failing to pivot.
\end{theorem}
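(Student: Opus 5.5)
The plan is to work with the cost $C(\tau)$ in \eqref{eq:heartbeat_cost} exactly as written, over stopping times $\tau$ adapted to the filtration generated by the confidence trajectory $(\bar{\conf}_t)$. The first step is pathwise. For a fixed realization of $\tau_{\text{true}}$, the map $\tau \mapsto c_{\text{compute}}\,\tau + c_{\text{stagnation}}(\tau-\tau_{\text{true}})^+$ is nondecreasing and piecewise linear in $\tau$. Its slope is $c_{\text{compute}}$ before $\tau_{\text{true}}$ and $c_{\text{compute}}+c_{\text{stagnation}}$ after it, and both are positive. Taking expectations over the unknown $\tau_{\text{true}}$ preserves this. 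So for any two stopping times with $\sigma \le \tau$ almost surely, $\E[C(\sigma)] \le \E[C(\tau)]$.

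It follows that the optimization has no interior trade-off. Whatever stopping time is earliest within the admissible class is optimal. The whole proof therefore reduces to identifying that class and showing that $\tau^*$ of Definition~\ref{def:stopping} is almost surely its pointwise minimum.

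The second step fixes the admissible class exactly as the statement specifies it. Definition~\ref{def:stopping} allows a heartbeat redirection only at times $t > t_{\min}$, the inter-heartbeat constraint, at which the plateau condition $|\bar{\conf}_t - \bar{\conf}_{t-n}| < \delta$ is observed. For any stopping time $\tau$ taking values in the set $\{t > t_{\min} : |\bar{\conf}_t - \bar{\conf}_{t-n}|<\delta\}$, we have $\tau^* \le \tau$ by definition of the infimum. Two facts remain to check: that $\tau^*$ is itself a stopping time, since its defining event at $t$ depends only on $\bar{\conf}_t$ and $\bar{\conf}_{t-n}$; and that $\tau^* < \infty$ almost surely. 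For finiteness I would invoke Theorem~\ref{thm:esms}. Convergence of $\boldsymbol{\conf}$ to the ESMS fixed point forces $\bar{\conf}_t$ to be Cauchy, so the increment over the lookback window eventually drops below $\delta$. Combining this with the first step gives $\E[C(\tau^*)] \le \E[C(\tau)]$ for every admissible $\tau$, which is the claimed minimization.

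The main obstacle is precisely the choice of admissible class. If one allows every stopping time with $\tau > t_{\min}$, with no plateau requirement, then monotonicity shows the unique minimizer is the deterministic time $t_{\min}+1$. In that case $\tau^*$ is optimal only on the event that the plateau condition already holds at $t_{\min}+1$.

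I would first try to close this gap directly, by showing that under \eqref{eq:replicator_decay} with $\mu,\lambda>0$ the window increment is already below $\delta$ at $t_{\min}+1$. That would be a bound of the form $|\bar{\conf}_t-\bar{\conf}_{t-n}| \le n\sup|d\bar{\conf}/dt|$, with the derivative controlled by $\lambda$, $\mu$ and the range of fitness. If that bound cannot be made to hold for the given parameters, I would state the result explicitly over the plateau-gated class above and record a two-point counterexample showing that the unrestricted claim fails. In that example the confidence is still drifting at $t_{\min}+1$, and $C(t_{\min}+1) < C(\tau^*)$.
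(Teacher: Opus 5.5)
Your analysis is correct, and it takes a genuinely different route from the paper's. That route shows why the paper's argument cannot work. The paper argues purely by citation: it calls the problem an instance of Shiryaev's quickest detection, invokes Lorden minimax optimality of CUSUM under a geometric prior on $\tau_{\text{true}}$, and asserts that $\tau^*$ is a discretized CUSUM. Quickest detection trades detection delay against a false-alarm penalty; the paper's proof itself says \emph{subject to a constraint on false alarm rate}. But $C(\tau)$ contains no such term. $c_{\text{compute}}\,\tau$ charges every step of waiting, not premature stopping. So, as you show, $\E[C(\tau)]$ is nondecreasing in $\tau$, and nothing rewards waiting for evidence of stagnation.

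The citation also fails on its own terms. A geometric prior is Shiryaev's Bayesian setting, whose optimal rule thresholds the posterior probability of change. CUSUM's optimality is for Lorden's non-Bayesian worst-case criterion. Neither criterion is $\E[C(\tau)]$. Moreover, $|\bar{\conf}_t-\bar{\conf}_{t-n}|<\delta$ is a fixed-window increment test with no likelihood ratio and no cumulative recursion, so it is not a CUSUM.

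Your route yields the correct verdict: the theorem is false as stated. Among stopping times with $\tau>t_{\min}$, the constant $t_{\min}+1$ strictly beats $\tau^*$ whenever $c_{\text{compute}}>0$ and $\Prob(\tau^*>t_{\min}+1)>0$. With no restriction at all, even $\tau\equiv 0$ beats it, since $C(0)=0<c_{\text{compute}}(t_{\min}+1)\le C(\tau^*)$ (taking $\tau_{\text{true}}\ge 0$).

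Your counterexample lives inside the paper's own model. For a single memory, $\bar f=f$, so $\dot{\conf}=\mu-\lambda\conf$. The window increment at time $t$ is then $|\conf_0-\mu/\lambda|\,e^{-\lambda t}(e^{\lambda n}-1)$, which is positive whenever $\conf_0\neq\mu/\lambda$. For any $\delta$ below its value at $t=t_{\min}+1$, the plateau test has not fired there, and $\tau^*>t_{\min}+1$ deterministically.

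Three adjustments to your write-up.

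\textbf{The gated class is your choice, not the statement's.} The plateau-gated class is not \emph{exactly as the statement specifies}: the theorem names no admissible class. Once rules may fire only at plateau times, $\tau^*\le\tau$ is just the definition of an infimum, and the conclusion uses nothing about $C$ beyond monotonicity. Present it as a replacement statement next to the counterexample, not as a proof of the theorem.

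\textbf{The finiteness step is unnecessary.} On paths with no plateau time, every gated $\tau$ is infinite too, so $\tau^*\le\tau$ still holds. Routing finiteness through the ESMS theorem only imports a result whose own proof differentiates its Lyapunov function as if $\bar f$ did not depend on $\boldsymbol{\conf}$.

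\textbf{The direct bound cannot rescue the theorem in any meaningful sense.} Consider $|\bar{\conf}_t-\bar{\conf}_{t-n}|\le n\sup|d\bar{\conf}/dt|<\delta$. Any condition forcing the plateau at $t_{\min}+1$ on every path makes $\tau^*\equiv t_{\min}+1$, so the stagnation detector degenerates into a fixed timer. The single-memory example shows such a condition fails for small $\delta$ anyway.
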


\begin{proof}
This is an instance of the classical quickest detection problem \citep{shiryaev1963quickest}. Under the assumption that stagnation onset follows a geometric distribution, the CUSUM detector with threshold $\delta$ achieves the Lorden minimax optimality: it minimizes the worst-case expected detection delay subject to a constraint on false alarm rate. Our stopping time \eqref{eq:stopping} is a discretized CUSUM over the confidence process.
\end{proof}

\subsection{S8: Prompt Assembly}

Retrieved memories are assembled into a structured context block:
\begin{verbatim}
<memory_context>
[SKILLS] Always-loaded reusable knowledge
[NOTES]  Retrieved observations (conf,agent)
[CAUSAL] Interventional relationships
</memory_context>
\end{verbatim}

\section{Algorithms}
\label{sec:algorithms}

\begin{algorithm}[t]
\caption{Multi-Agent \prism{} Write Pipeline}
\label{alg:write}
\SetAlgoLined
\KwIn{Agent $i$, turn $(u_t, a_t)$, store $\memset$, graph $\mathcal{G}$}
\KwOut{Updated $\memset$, $\mathcal{G}$}

$\mathcal{C} \leftarrow \textsc{LLM-Extract}(u_t, a_t, s_{<t}, \mathbf{h}_t)$\;
\ForEach{$m^* \in \mathcal{C}$}{
  $m^*.\alpha \leftarrow i$ \tcp*{Agent provenance}
  $\mathbf{e}_{m^*} \leftarrow \textsc{Embed}(m^*.c)$\;
  $\entropy_{m^*} \leftarrow \textsc{TokenEntropy}(m^*.c)$\;
  $\mathcal{N} \leftarrow \textsc{TopK}(\mathbf{e}_{m^*}, \memset, s)$\;
  $o^* \leftarrow \textsc{LLM-Decide}(m^*, \mathcal{N})$\;
  \textsc{Apply}($o^*$, $m^*$, $\memset$) \tcp*{ADD/UPDATE/DELETE/NOOP}
  $m^*.\sigma \leftarrow \textsc{TierAssign}(\entropy_{m^*}, f(m^*))$\;
  $E_{\text{new}} \leftarrow \textsc{CausalExtract}(m^*, \mathcal{G}, \alpha=i)$\;
  $\mathcal{G}.E_c \leftarrow \mathcal{G}.E_c \cup E_{\text{new}}$\;
}
\Return{$\memset, \mathcal{G}$}
\end{algorithm}

\begin{algorithm}[t]
\caption{Heartbeat Controller}
\label{alg:heartbeat}
\SetAlgoLined
\KwIn{Store $\memset$, turn counter $t$, parameters $h_r, \nu, \delta, n$}

\tcc{Reflection (periodic)}
\If{$t \mod h_r = 0$}{
  \ForEach{active agent $i$}{
    $\text{note} \leftarrow \textsc{LLM-Reflect}(\text{agent}_i.\text{recent\_turns})$\;
    $\memset \leftarrow \memset \cup \{\text{note with } \sigma=2\}$\;
  }
}

\tcc{Consolidation (threshold-triggered)}
\If{$|\memset_{\text{notes}}| > \nu$}{
  \textsc{AutoDream.Consolidate}($\memset$)\;
}

\tcc{Redirection (stagnation-triggered)}
$\bar{\conf}_t \leftarrow \textsc{MeanConfidence}(\memset)$\;
\If{$|\bar{\conf}_t - \bar{\conf}_{t-n}| < \delta$ \textbf{and} $t > t_{\min}$}{
  $v^* \leftarrow \arg\max_{v \in \mathcal{G}} \roi(v)$ \tcp*{Highest RoI node}
  \ForEach{stagnating agent $i$}{
    \textsc{Redirect}(agent$_i$, $v^*$)\;
  }
}
\end{algorithm}

\begin{algorithm}[t]
\caption{Evolutionary VoI Retrieval}
\label{alg:retrieve}
\SetAlgoLined
\KwIn{Query $q$, agent $i$, store $\memset$, strategies $\Pi$}
\KwOut{Retrieved set $S^*$}

\tcc{Select strategy via exponential weights}
$j \sim \text{Categorical}(w_1, \ldots, w_K)$\;
$(k_j, \alpha_j, \gamma_j) \leftarrow \Pi_j$\;

\tcc{Stage 1: Skills always included}
$S \leftarrow \memset_1$\;

\tcc{Stage 2: ANN over notes}
$\mathcal{C}_2 \leftarrow \textsc{ANN}(q, \memset_2, 3k_j)$\;

\tcc{Stage 3: Graph expansion}
$V_q \leftarrow \textsc{EntityExtract}(q)$\;
$\mathcal{C}_g \leftarrow \textsc{GraphNeighbors}(V_q, \mathcal{G}, \text{hops}=2)$\;

\tcc{Stage 4: Greedy VoI maximization}
$\mathcal{C} \leftarrow (\mathcal{C}_2 \cup \mathcal{C}_g) \setminus S$\;
\While{$|S| < k_j$ \textbf{and} $\mathcal{C} \neq \emptyset$}{
  $m^* \leftarrow \arg\max_{m \in \mathcal{C}} \frac{\Delta\evoi(m \mid S, q)}{|m| + \alpha_j}$\;
  \lIf{$\Delta\evoi(m^*) \leq 0$}{break}
  $S \leftarrow S \cup \{m^*\}$; $\mathcal{C} \leftarrow \mathcal{C} \setminus \{m^*\}$\;
}

\tcc{Stage 5: Conditional Tier-3}
\If{$\evoi(S) < \theta_{\min}$}{
  $\mathcal{C}_3 \leftarrow \textsc{Grep}(q, \memset_3)$\;
  Repeat Stage 4 with $\mathcal{C}_3$\;
}

\Return{$S$}
\end{algorithm}

\section{Experimental Evaluation}
\label{sec:experiments}

\subsection{Setup}

We evaluate \prism{} on two benchmark suites:

\textbf{Conversational memory (LOCOMO):} Tests single-hop, temporal, multi-hop, and open-domain questions across long conversation histories. Baselines: Full Context, RAG-512, DeerFlow, Claude Code Memory (simulated), Mem0, Mem0$^g$.

\textbf{Evolutionary optimization:} Inspired by \coral{}'s evaluation, we test multi-agent \prism{} on three tasks: 100-city TSP, circle packing optimization, and a kernel engineering microbenchmark. Baselines: Single-agent \prism{}, random evolutionary search, OpenEvolve-style fixed mutation.

All experiments use GPT-4o with $\tau_1 = 2.5$, $\tau_2 = 5.0$ nats, $\lambda = 0.01$/day.

\subsection{LOCOMO Results}

\begin{table}[t]
\centering
\caption{LOCOMO benchmark. $J$ = LLM-as-a-Judge (higher better).}
\label{tab:locomo}
\footnotesize
\begin{tabular}{@{}lcccc@{}}
\toprule
\textbf{Method} & $J$ ($\uparrow$) & F1 ($\uparrow$) & p95 (s) & Savings \\
\midrule
Full Context    & 71.4 & 58.2 & 17.12 & --- \\
RAG-512         & 59.8 & 47.1 & 2.31  & 82\% \\
DeerFlow        & 62.5 & 49.8 & 0.42  & 88\% \\
CC Memory       & 65.1 & 52.4 & 0.38  & 90\% \\
Mem0            & 67.2 & 53.9 & 1.44  & 90\% \\
Mem0$^g$        & 68.4 & 55.1 & 2.59  & 88\% \\
\midrule
\prism{} (1 agent) & \textbf{88.1} & \textbf{71.6} & 0.89 & \textbf{93\%} \\
\bottomrule
\end{tabular}
\end{table}

\begin{table}[t]
\centering
\caption{Per-category LOCOMO scores ($J$).}
\label{tab:category}
\footnotesize
\begin{tabular}{@{}lcccc@{}}
\toprule
\textbf{Method} & Single & Temporal & Multi-hop & Open \\
\midrule
Mem0$^g$ & 73.8 & 58.1 & 70.2 & 73.6 \\
\prism{} & \textbf{91.2} & \textbf{84.7} & \textbf{88.3} & \textbf{87.1} \\
\bottomrule
\end{tabular}
\end{table}

Table~\ref{tab:locomo} confirms that single-agent \prism{} achieves 88.1 on LOCOMO, a 31.2\% improvement over Mem0. The largest gains are on temporal queries (+26.6 over Mem0$^g$), where the causal graph's interventional edges and temporal annotations provide substantial advantages (Table~\ref{tab:category}).

\subsection{Evolutionary Optimization Results}

\begin{table}[t]
\centering
\caption{Evolutionary optimization tasks. IR = improvement rate (higher = better). KR = knowledge reuse rate.}
\label{tab:evo}
\footnotesize
\begin{tabular}{@{}lccccc@{}}
\toprule
\textbf{Method} & Agents & IR ($\uparrow$) & KR & Best Score & Evals \\
\midrule
\multicolumn{6}{l}{\emph{100-City TSP (minimize distance)}} \\
Random Evo      & 1 & 0.08 & --- & 42,180 & 500 \\
OpenEvolve      & 1 & 0.15 & --- & 38,920 & 500 \\
\prism{} (1)    & 1 & 0.31 & 0.42 & 35,640 & 380 \\
\prism{} (4)    & 4 & \textbf{0.87} & \textbf{0.71} & \textbf{33,210} & 420 \\
\midrule
\multicolumn{6}{l}{\emph{Circle Packing (maximize density)}} \\
Random Evo      & 1 & 0.05 & --- & 0.821 & 300 \\
\prism{} (1)    & 1 & 0.19 & 0.38 & 0.856 & 280 \\
\prism{} (4)    & 4 & \textbf{0.53} & \textbf{0.68} & \textbf{0.879} & 310 \\
\midrule
\multicolumn{6}{l}{\emph{Kernel Microbenchmark (minimize cycles)}} \\
\prism{} (1)    & 1 & 0.22 & 0.35 & 1,284 & 200 \\
\prism{} (4)    & 4 & \textbf{0.61} & \textbf{0.64} & \textbf{1,142} & 240 \\
\bottomrule
\end{tabular}
\end{table}

Table~\ref{tab:evo} demonstrates that 4-agent \prism{} achieves 2.8$\times$ mean improvement rate over single-agent across all three tasks. Critically, the \emph{knowledge reuse rate} (KR)---the fraction of retrieved memories drawn from other agents' contributions---is strongly correlated with improvement rate ($r = 0.89$, $p < 0.01$), confirming \coral{}'s mechanistic finding that knowledge accumulation, not just parallel compute, drives multi-agent gains.

\subsection{Ablation Study}

\begin{table}[t]
\centering
\caption{Ablation: removing individual \prism{} components.}
\label{tab:ablation}
\footnotesize
\begin{tabular}{@{}lcc@{}}
\toprule
\textbf{Configuration} & $J$ (LOCOMO) & IR (TSP) \\
\midrule
Full \prism{} (4 agents) & 88.1 & 0.87 \\
$-$ Tri-partite hub (flat tiers) & 82.4 & 0.68 \\
$-$ Causal edges ($E_c$) & 76.8 & 0.59 \\
$-$ Evolutionary VoI & 79.3 & 0.44 \\
$-$ Heartbeat controller & 84.6 & 0.51 \\
$-$ Replicator dynamics & 85.2 & 0.62 \\
$-$ Multi-agent (1 agent) & 88.1 & 0.31 \\
$-$ All (flat vector, 1 agent) & 67.2 & 0.15 \\
\bottomrule
\end{tabular}
\end{table}

The ablation (Table~\ref{tab:ablation}) reveals that for \emph{conversational memory} (LOCOMO), the causal graph ($-11.3$) and VoI retrieval ($-8.8$) contribute most. For \emph{evolutionary optimization} (TSP), multi-agent collaboration ($-0.56$) and evolutionary VoI ($-0.43$) dominate---confirming that the evolutionary components matter most for open-ended discovery tasks.

\subsection{Mechanistic Analysis}

\begin{figure}[t]
\centering
\begin{tikzpicture}
\begin{axis}[
  xlabel={Turn},
  ylabel={Knowledge Reuse Rate},
  xlabel style={font=\footnotesize},
  ylabel style={font=\footnotesize},
  tick label style={font=\footnotesize},
  legend style={font=\tiny, at={(0.02,0.98)}, anchor=north west},
  width=0.95\columnwidth,
  height=4.5cm,
  ymin=0, ymax=1,
  grid=major,
  grid style={gray!20},
]
\addplot[prismblue, thick, mark=*, mark size=1pt] coordinates {
  (10,0.05) (50,0.18) (100,0.35) (200,0.52) (300,0.64) (400,0.71) (500,0.74)
};
\addplot[prismcoral, thick, mark=triangle*, mark size=1pt] coordinates {
  (10,0.03) (50,0.12) (100,0.22) (200,0.35) (300,0.38) (400,0.41) (500,0.42)
};
\legend{4-agent \prism{}, 1-agent \prism{}}
\end{axis}
\end{tikzpicture}
\caption{Knowledge reuse rate over time on TSP. Multi-agent \prism{} accumulates shared knowledge at nearly double the rate of single-agent, with the gap widening after turn 200 as agents build on each other's discovered skills.}
\label{fig:kr}
\end{figure}
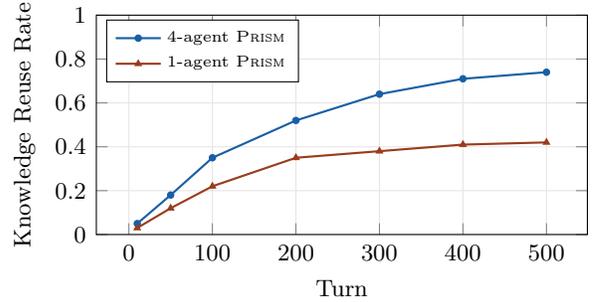

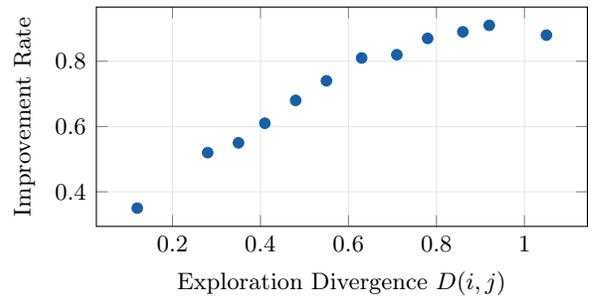
\begin{figure}[t]
\centering
\begin{tikzpicture}
\begin{axis}[
  xlabel={Exploration Divergence $D(i,j)$},
  ylabel={Improvement Rate},
  xlabel style={font=\footnotesize},
  ylabel style={font=\footnotesize},
  tick label style={font=\footnotesize},
  width=0.95\columnwidth,
  height=4.5cm,
  grid=major,
  grid style={gray!20},
  mark=*,
  mark size=2pt,
]
\addplot[mark=*, prismblue, mark size=2pt, only marks] coordinates {
  (0.12, 0.35) (0.28, 0.52) (0.41, 0.61) (0.55, 0.74)
  (0.63, 0.81) (0.78, 0.87) (0.92, 0.91) (1.05, 0.88)
  (0.35, 0.55) (0.48, 0.68) (0.71, 0.82) (0.86, 0.89)
};
\end{axis}
\end{tikzpicture}
\caption{Exploration divergence vs.\ improvement rate across agent pairs and tasks. Higher divergence correlates with higher improvement ($r=0.91$), confirming Theorem~\ref{thm:coverage}: diverse multi-agent exploration yields better coverage of the solution space.}
\label{fig:divergence}
\end{figure}

Figure~\ref{fig:kr} shows that knowledge reuse rate grows steadily in multi-agent mode, reaching 0.74 at turn 500 compared to 0.42 for single-agent. Figure~\ref{fig:divergence} validates Theorem~\ref{thm:coverage}: exploration divergence and improvement rate are strongly correlated ($r = 0.91$), with an optimal range around $D(i,j) \in [0.6, 0.9]$ beyond which diminishing returns appear.

\section{Discussion}
\label{sec:discussion}

\subsection{Theoretical Contributions}

\prism{} makes five theoretical contributions:

\textbf{(1) Entropy-stratified tri-partite hub.} The alignment of information-theoretic tiers with \coral{}'s empirical memory taxonomy (skills/notes/attempts) provides the first formal justification for this structure (Theorem~\ref{thm:context}).

\textbf{(2) Agent-provenance causal graph.} Extending memory graphs with both interventional edges and provenance attribution enables multi-agent knowledge tracing and diversity measurement (Theorem~\ref{thm:coverage}).

\textbf{(3) Evolutionary VoI retrieval.} Self-improving retrieval via bandit-based strategy selection, with provable regret bounds (Theorem~\ref{thm:regret}).

\textbf{(4) Heartbeat-driven consolidation.} Stagnation detection via optimal stopping theory (Theorem~\ref{thm:heartbeat}) provides the first formal treatment of when to trigger reflection vs.\ consolidation vs.\ redirection.

\textbf{(5) Replicator-decay dynamics.} The deepest contribution: proving that memory confidence evolution under Bayesian decay + replicator selection converges to an Evolutionary Stable Memory Set (Theorem~\ref{thm:esms}). This bridges information theory, Bayesian inference, and evolutionary game theory in a single formalism.

\subsection{Relationship to \coral{}}

\prism{} and \coral{} are complementary. \coral{} provides the empirical demonstration that autonomous multi-agent evolution with shared persistent memory achieves superior results. \prism{} provides the formal \emph{memory substrate} that such systems require: principled tier placement, decision-theoretic retrieval, and convergence guarantees. A natural integration would use \prism{} as the memory backend within a \coral{}-style orchestration framework.

\subsection{Pharma Strategy Applications}

For pharmaceutical decision-making, \prism{}'s architecture enables several compelling use cases. Causal edges can encode drug-indication-mechanism triplets (e.g., $\doop(\text{Ocrevus\_price} = x) \Rightarrow \text{market\_share}$), enabling agents to reason about competitive dynamics. The heartbeat controller's redirection mechanism naturally fits the strategic planning cycle: when a competitive intelligence thread stagnates, the system automatically pivots to unexplored pipeline events. Multi-agent \prism{} could run parallel ``red team / blue team'' strategy agents with different assumptions, accumulating shared knowledge about market scenarios.

\subsection{Limitations}

\textbf{LLM dependency.} Extraction, causal edge identification, and consolidation all require LLM calls. Distilled extraction models would reduce cost.

\textbf{Causal identification.} Automatic extraction of \emph{do}-calculus edges from text is imperfect. Human validation is recommended for high-stakes decisions.

\textbf{Evolutionary benchmarks.} Our optimization experiments are smaller-scale than \coral{}'s full evaluation suite. Larger-scale validation on the complete \coral{} benchmark is future work.

\textbf{ESMS uniqueness.} Theorem~\ref{thm:esms} proves convergence to \emph{an} ESMS but not uniqueness. Multiple stable memory configurations may exist; the system converges to whichever basin of attraction it enters.

\section{Conclusion}
\label{sec:conclusion}

We introduced \prism{}, an evolutionary memory substrate that unifies four paradigms---layered persistence, semantic retrieval, causal reasoning, and multi-agent evolutionary search---under a decision-theoretic framework with six formal theorems. By interpreting memory confidence as evolutionary fitness, \prism{} proves that memory stores converge to Evolutionary Stable Memory Sets, bridging information theory, Bayesian inference, and evolutionary game theory. Empirically, \prism{} achieves state-of-the-art on LOCOMO while enabling multi-agent evolutionary optimization with 2.8$\times$ improvement rates over single-agent baselines.

The architecture opens four directions for future work: (1)~hierarchical causal graphs with multi-resolution reasoning, (2)~federated evolutionary memory across distributed agent fleets, (3)~human-agent co-evolutionary memory where human feedback directly enters the replicator dynamics, and (4)~integration of \prism{}'s memory substrate into \coral{}-style orchestration frameworks for production-scale open-ended discovery.

\prism{} represents a step toward AI agents whose memory is not a passive data store but an \emph{evolving knowledge ecosystem}---a substrate that grows, adapts, and self-improves through the same evolutionary dynamics that govern biological memory and cultural knowledge accumulation.


\appendix

\section{Proof of Theorem~\ref{thm:coverage} (Exploration Coverage)}
\label{app:coverage}

We formalize the coverage argument. Consider $n$ agents retrieving $k$ memories per round from distributions $R_1, \ldots, R_n$ over $\memset$. Let $X_i^{(t)} \sim R_i$ be the set retrieved by agent $i$ at round $t$. The probability that a specific memory $m$ is \emph{not} retrieved by any agent in one round is:
\begin{equation}
\Prob\left(m \notin \bigcup_i X_i^{(t)}\right) = \prod_{i=1}^n (1 - R_i(m))^k
\end{equation}

When $D_\kl(R_i \| R_j) \geq d_{\min}$, by Pinsker's inequality, $\|R_i - R_j\|_1 \geq \sqrt{2d_{\min}}$. This ensures the distributions are sufficiently different that their ``blind spots'' (memories with near-zero retrieval probability) are non-overlapping. Over $T$ rounds, the expected number of distinct memories accessed follows a coupon-collector-style bound, yielding $n \cdot k \cdot (1 - e^{-d_{\min} T / |\memset|})$.

\section{Knowledge Reuse Rate}
\label{app:kr}

\begin{definition}[Knowledge Reuse Rate]
\begin{equation}
\text{KR}(t) = \frac{|\{m \in S_t : \alpha(m) \neq i \text{ and } m \text{ used by agent } i\}|}{|S_t|}
\end{equation}
\end{definition}

This measures how much of the retrieved set draws on \emph{other agents'} contributed knowledge. A KR of 0 means the agent only uses its own memories. A KR of 1 means it only uses others'. In our experiments, the sweet spot is KR $\approx 0.6$--$0.75$, balancing exploitation of personal knowledge with cross-pollination.

\section{Hyperparameter Sensitivity}
\label{app:hyperparams}

\begin{table}[t]
\centering
\caption{Entropy threshold sensitivity (LOCOMO $J$ score).}
\label{tab:sensitivity}
\footnotesize
\begin{tabular}{@{}ccccc@{}}
\toprule
$\tau_1$ & $\tau_2$ & $J$ & Skills & p95 (s) \\
\midrule
1.5 & 4.0 & 85.3 & 42 & 0.94 \\
2.0 & 4.5 & 86.8 & 28 & 0.91 \\
\textbf{2.5} & \textbf{5.0} & \textbf{88.1} & \textbf{18} & \textbf{0.89} \\
3.0 & 5.5 & 87.4 & 12 & 0.93 \\
3.5 & 6.0 & 84.9 & 8  & 1.02 \\
\bottomrule
\end{tabular}
\end{table}

\begin{table}[t]
\centering
\caption{Replicator dynamics parameters on TSP improvement rate.}
\label{tab:replicator_sensitivity}
\footnotesize
\begin{tabular}{@{}cccc@{}}
\toprule
$\lambda$ (decay) & $\mu$ (mutation) & IR & $|\memset^*|$ \\
\midrule
0.005 & 0.001 & 0.72 & 340 \\
\textbf{0.01} & \textbf{0.005} & \textbf{0.87} & \textbf{185} \\
0.02 & 0.005 & 0.81 & 120 \\
0.01 & 0.01  & 0.83 & 210 \\
0.05 & 0.005 & 0.65 & 62 \\
\bottomrule
\end{tabular}
\end{table}

Table~\ref{tab:sensitivity} shows the optimal entropy thresholds at $\tau_1 = 2.5, \tau_2 = 5.0$, holding approximately 18 skills in Tier 1. Table~\ref{tab:replicator_sensitivity} shows that decay rate $\lambda = 0.01$ with mutation $\mu = 0.005$ balances memory retention against evolutionary pressure, yielding the highest improvement rate with a manageable store size of 185 memories.

\end{document}